\newcommand{\act}[1]{#1}
\newcommand{\pref}{\succ}
\newcommand{\state}[1]{#1}
\newcommand{\Cp}{\ensuremath C}
\newcommand{\Yolobot}{\textsc{Yolobot}}
\newcommand{\omcts}{\mbox{O-MCTS}}
\newcommand{\omab}{\mbox{O-UCB}}
\newcommand{\ucb}{\mbox{UCB}}
\newcommand{\ohmab}{\mbox{OH-UCB}}
\newcommand{\multi}{\mbox{MultiSBM}}
\newcommand{\pbmcts}{\mbox{PB-MCTS}}
\newcommand{\mixmax}{\textsc{MixMax}}
\newcommand{\game}[1]{\textsl{#1}}
\newcommand{\citeauthor}[1]
{\cite{#1}}
\newcommand{\shortcite}[1]
{}
\newcommand{\citeyear}[1]
{}
\newcommand{\citeauthoryear}[1]
{}
\newcommand{\citet}[1]
{\citeauthor{#1}˜\shortcite{#1}}
\newcommand{\citep}{\cite}
\begin{document}


\title{Ordinal Monte Carlo Tree Search
}


\author{
Tobias Joppen         \and
        Johannes F\"urnkranz 
}


\institute{
}

\date{Received: date / Accepted: date}

\maketitle

\begin{abstract} 
In many problem settings, most notably in game playing, an agent receives a possibly delayed reward for its actions.
Often, those rewards are handcrafted and not naturally given.
Even simple terminal-only rewards, like winning equals $1$ and losing equals $-1$, can not be seen as an unbiased statement, since these values are chosen arbitrarily, and the behavior of the learner may change with different encodings. 
It is hard to argue about good rewards and the performance of an agent often depends on the design of the reward signal. 
%
In particular, in  
domains where states by nature only have an ordinal ranking and where meaningful distance information between game state values is not available,
a numerical reward signal is necessarily biased.
In  this paper we take a look at Monte Carlo Tree Search (MCTS), a popular algorithm to solve MDPs, highlight a reoccurring problem concerning its use of rewards, and show that an ordinal treatment of the rewards overcomes this problem.
Using the General Video Game Playing framework we show dominance of our newly proposed ordinal MCTS algorithm over other MCTS variants, based on a novel bandit algorithm that we also introduce and test versus \ucb.

\end{abstract}

\section{Introduction}
\label{sec:Introduction}

In reinforcement learning, an agent solves a Markov decision process (MDP) by selecting actions that maximize its long-term reward.
Most state-of-the-art algorithms 
assume numerical rewards.
In domains like finance, real-valued reward is naturally given,
but many other domains do not have a natural numerical reward representation.
In such cases, numerical values are often handcrafted by experts so that they optimize the performance of their algorithms. 
This process is not trivial, and it is hard to argue about good rewards.
Hence, such handcrafted rewards may easily be erroneous and contain biases.
For special cases such as domains with true ordinal rewards, it has been shown that it is impossible to create numerical rewards that are not biased. For example,
\cite{yannakakis2017ordinal} argue that emotions need to be treated as ordinal information. 

In fact, it often is hard or impossible to tell whether domains are real-valued or ordinal by nature.
Experts may even design handcrafted numerical reward without thinking about alternatives, since using numerical reward is state of the art and most algorithms need them.
In this paper we want to emphasize that numerical rewards do not have to be the ground truth and it may be worth-while for the machine learning community to have a closer look on other options, ordinal being only one of them.

A popular example where the use of numerical values fails is a minimalistic medicine treatment setting: 
Consider three possible outcomes of a treatment: healthy, unchanged and dead.
In the process of reward shaping one assigns a numerical value to each outcome.
Those numbers define the trade-off between how many patients have to be healed until one patient may die in comparison to no treatment (all patient unchanged).
It is impossible to avoid this trade-off with numerical values.
Using an ordinal score one could define the outcomes to be ordered as $\textit{healthy} > \textit{unchanged} > \textit{dead}$ without an implicit trade-off.
In this paper we present HO-UCB, an ordinal algorithm that is able to solve this treatment problem without trading off healed and dead patients.

\paragraph{MCTS}
Monte Carlo tree search (MCTS) is a popular algorithm to solve MDPs.
MCTS is used in many successful AI systems, such as AlphaGo \cite{silver2017mastering} or top-ranked algorithms in the general video game playing competitions \cite{perez2018general,YOLOBOT}.
A reoccurring problem of MCTS with limited time resources is its behavior in case of danger:
As a running example we look at a generic platform game, where an agent has to jump over deadly gaps to eventually reach the goal at the right.
Dying is very bad, and the more the agent proceeds to the right, the better.
The problem occurs by comparing the actions \emph{jump} and \emph{stand still}
:
jumping either leads to a better state than before because the agent proceeded to the right by successfully jumping a gap, or to the worst possible state (\emph{death}) in case the jump attempt failed. Standing still, on the other hand, safely avoids death, but will never advance to a better game state.
MCTS averages the obtained rewards gained by experience, which lets it often choose the safer action and therefore not progress in the game, 
because the (few) experiences ending with its death pull down the average reward of \emph{jump} below the mediocre but steady reward of standing still.
Because of this, the behavior of MCTS has also been called \emph{cowardly} in the literature \cite{jacobsen2014monte,khalifa2016modifying}.

Transferring those platform game experiences into an ordinal scale eliminates the need of meaningful distances. 
In this paper, 
we present an algorithm that only depends on pairwise comparisons in an ordinal scale, and selects \emph{jump} over \emph{stand still} if it more often is better than worse.
We call this algorithm Ordinal MCTS (\omcts{}) and compare it to different MCTS variants using the General Video Game AI (GVGAI) framework \cite{perez2016general}.

%

In the next section we introduce MABs and MDPs as our problem definitions and \ucb, \multi, \pbmcts{} and MCTS as already known solutions to solve those problems.
In Section~\ref{sec:O-MCTS}, we present our novel algorithms, followed by experiments to present how our algorithms compare to the existing ones (Sections~\ref{sec:Setup} and~\ref{sec:Experiments}).
\
\section{Monte Carlo Tree Search}
\label{sec:Foundations}
In this section, we briefly recapitulate Monte Carlo tree search and some of its variants, which are commonly used for solving Markov decision processes (MDP).
Prior to this, we introduce the multi-armed bandit problem (MAB) and a common used algorithm to solve MABs: UCB1.

\subsection{Multi-Armed Bandits}
\label{sec:MABs}
A \emph{Multi-Armed Bandit} (MAB) is a common problem one often faces:
One has a set of possible actions (or arms, $A$) to choose one from.
Once an arm $a$ is chosen, the player receives a reward $r$ sampled from a unknown distribution $X_a$. 
Often rewards are designed to be numerical ($r \in R$).

The task is to identify the optimal arm, which returns the best rewards.
For numerical rewards, the \emph{best arm} $a^*$ can be defined to be the arm with the highest rewards in average, or $a^* = \arg\max_a \mathbb{E}[X_a]$.
For ordinal rewards, the reward function does not necessarily be in $\mathbb{R}$, but in $\mathbb{O}$, where all elements $o \in \mathbb{O}$ can be ordered via a given preference function $o_1 < o_2$.
Notice, that for $\mathbb{R}$ such a preference function is given through the natural ordering.
Since one can not add together or calculate an average of elements from $\mathbb{O}$, we need a different definition of what the \emph{best arm} is.
Other than for numerical rewards, there is not such a consensus about what the definition of optimality.
In this paper, we are interested in the borda winner \cite{black1976partial} - the arm that has the highest chance of beating an other randomly chosen arm.
We present our algorithm to solve ordinal MABs in a later section.

\subsection{Markov Decision Process}
Conventional Monte Carlo tree search assumes a scenario in which an agent moves through a state space by taking different actions.
This can not be modeled by a MAB, since the arms and their reward distributions are fixed.
A Markov Decision Process (MDP) \cite{MDP} takes MABs to a next level by not only being in one fixed state with its fixed arms, but by changing the state every time an action is played.
A MDP can be formalized as the following:


\begin{itemize}
	
	\item[--] A (finite) set of \emph{states} $S$. 
	
	\item[--] A (finite) set of \emph{actions} $A$ the agent can perform. Sometimes, only a subset $A(\state{s}) \subset A$ of actions is applicable in a state $\state{s}$.
	
	\item[--] A Markovian \emph{state transition} function $\delta(\state{s}'\mid \state{s},\act{a})$ denoting probability that invoking action $\act{a}$ in state $\state{s}$ leads the agent to state $\state{s'}$.
	
	\item[--] A \emph{reward function} $r(\state{s}) \in \mathbb{R}$ that defines the reward the agent receives in state $\state{s}$.
	
	
	\item[--] A distribution of \emph{start states} $\mu(\state{s}) \in [0,1]$, 
	 defining the probability that the MDP starts in that state. We assume a single start state $\state{s}_0$, with $\mu(\state{s}_0)=1$ and $\mu(\state{s}) = 0 \; \forall \state{s} \neq \state{s}_0$.
\item[--] A set of \emph{terminal states} for which $A(\state{s}) = \emptyset$. We assume that only terminal states are associated with a non-zero reward.

\end{itemize}

The task is to learn a \emph{policy} $\pi(\act{a}\mid \state{s})$ that defines the probability of selecting an action $\act{a}$ in state $\state{s}$. The optimal policy $\pi^*(\act{a}\mid \state{s})$ maximizes the expected, cumulative reward
\begin{equation}
\begin{aligned}
V(\state{s}_t) &= \mathbb{E} \left[ \sum_{t=0}^\infty \gamma^t r(\state{s}_t) \right],\\
&= r(\state{s}_t) + \gamma \int_S \int_A \delta(\state{s}_{t+1}\mid \act{a}_t,\state{s}_t)\pi(\act{a}_t\mid \state{s}_t) V(\state{s}_{t+1}). 
\end{aligned}
\label{eq:value}
\end{equation}
The optimal policy maximizes $V(\state{s}_t)$ for all time steps $t$. Here, $\gamma \in [0,1)$ is a discount factor, which dampens the influence of later events in the sequence.
For finding an optimal policy, one needs to solve the so-called exploration/exploitation problem. The state/action spaces are usually too large to sample exhaustively. Hence, it is required to trade off the improvement of the current, best policy (exploitation) with an exploration of unknown parts of the state/action space.

We also investigate an ordinal variation of the classical MDP: O-MDP \cite{weng2011markov}
Just like it is for the ordinal MAB, the ordinal MDP does use ordinal rewards and thus numerical definitions of optimality and regret can not be applied here.
As described in the last section, we are interested in the borda winner in case of ordinal rewards.
The borda winner maximizes the chance of beating the other arms.
The chance for an arm $a$ to beat all other arms is called borda score ($B(a)$ - see Section~\ref{sec:Borda}).
Each non-optimal arm has a lower borda score than the optimal arm.
The regret of playing a non optimal arm $a$ instead of $a^*$ is the difference of borda score: $\textit{regret}_a = B(a^*) - B(a)$.
Obviously it is $\textit{regret}_{a^*}= 0$.
Note that it is not possible to use one numerical bandit to optimize the borda score, since it is not the direct reward that is visible for the agent.
Further more, the borda score is not only dependent on the distribution of the current arm, but also is dependent on the reward distributions of all other arms (since the borda score is defined on comparisons of those).


\subsection{Multi-armed Bandits}
After introducing the problem frameworks MAB and MDP, we introduce common algorithms to solve those.
We start with a popular algorithm to solve MABs, where the task is to identify the arm (or action) with the highest return by repeatedly pulling one of the possible arms.
In this setting, there is only one non-terminal state, and 
the task is to achieve the highest average reward by playing theoretically infinite times.
Here the exploration/exploitation dilemma occurs because the player must play the best-known arm trying to maximize the average reward (exploitation), but also needs to search for the best arm among all alternatives (exploration).
A well-known technique for resolving this dilemma in bandit problems is the \emph{upper confidence bounds (UCB)} \cite{UCB}.
UCB estimates upper bounds on the expected reward for a certain arm, and chooses the action with the highest associated upper bound. We then observe the outcome and update the bound. The simplest UCB policy (\ref{eq:ucb})
gives a bonus $\sqrt{\frac{2 \ln n}{n_j}}$ to the average reward $\bar{X_j}$, which depends on the number of visits, thereby increasing the chance that arms that have not yet been frequently played are selected in subsequent iterations.
\begin{equation}
UCB1 = \bar{X_j} + \sqrt{\frac{2 \ln n}{n_j}}
\label{eq:ucb}
\end{equation}

\noindent
The first term favors arms with high payoffs, while the second term guarantees exploration \cite{UCB}.  The reward is expected to be bound by $[0,1]$.
In Section~\ref{sec:O-MABAlg}, we introduce two algorithms that are able to solve ordinal MABs.

\subsection{Duelling Bandits}
A related topic to ordinal bandits are dueling bandits, where at each time step two arms are pulled and the agent gets one reward indicating which arm won in the direct comparison. 
In reality, repeated dueling of different pairs is often used to identify the best element of a set, like in most sport leagues where different teams battle each other.
The biggest downside of this approach (at least from a energy or optimization point of view) is that each team needs to play against each other team at least once to be able to identify a winner.
If it would be possible to measure teams independently, only one measure per team would suffice to identify the winner.
Sample efficiency is the biggest difference between dueling and ordinal bandits:
In ordinal bandits, it is possible to measure the quality of one action, where for dueling bandits it always needs two actions to be compared against each other.

It has been shown that it is possible to reduce dueling bandits to common numerical bandits\cite{ailon2014reducing}.
Hence it is possible to optimize a bandit using preference information by relying on numerical bandit algorithms.
In the following, we intoduce the MultiSBM algorithm which enables to compare UCB and our ordinal bandit algorithm to a dueling bandits approach.

\subsection{MultiSBM}
The MultiSBM algorithm is able to learn from preference feedback, while not being restricted to the dueling bandit framework, where two arms are pulled at a time instead of one.
The main idea is to have a numerical bandit for each arm (for example using UCB).
In each round $t$ the last round played arm $a_{t-1}$ defines which bandit will be used to select the arm to be played $a_t$.
The feedback signal for the current bandit is the preference information of how $a_{t-1}$ did perform in comparison to $a_t$\cite{ailon2014reducing}.
%
We use MultiSBM as a preference-learning baseline to compare to UCB and our ordinal algorithms.

\subsection{Monte Carlo Tree Search}
\label{sec:MCTS}
Monte Carlo tree search (MCTS) is a method for approximating an optimal policy for a MDP.
It builds a partial search tree, which is more detailed where the rewards are high.
MCTS spends less time evaluating less promising action sequences, but does not avoid them entirely 
to explore the state space.
The algorithm iterates over four steps \cite{MCTSSurvey}:

\begin{enumerate}
\item \emph{Selection:} Starting from the root node $v_0$, a \emph{tree policy} traverses to deeper nodes $v_k$, until a state with unvisited successor states is reached.
\item \emph{Expansion:} One successor state is added to the tree.
\item \emph{Simulation:} Starting from the new state, a so-called \emph{rollout} is performed, i.e., random actions are played until a terminal state is reached or a depth limit is exceeded.
\item \emph{Backpropagation:} The reward of the last state of the simulation is backed up through all selected nodes.
\end{enumerate}
The UCT formula
\begin{equation}
a^*_v = \max_{a \in A_v} \bar{X}_v(a) + 2 \Cp \sqrt{\frac{2 \ln n_v}{n_{v}(a)}}
\label{eq:uct}
\end{equation}
\noindent
has been derived from the UCB1 algorithm (see \ref{eq:ucb} and is used to select the most interesting action $a^*_v$ in a node $v$ by trading off the expected reward estimated as $\bar{X}_v(a) =$ $\sum_{i=0}^{n_v}X_v^{(i)}(a)/n_v(a)$ from $n_v(a)$ samples $X_v^{(i)}(a)$, 
with 
an exploration term $\sqrt{2 \ln (n_v)/n_v(a)}$. The trade-off parameter $\Cp$ is often set to $\Cp = 1/\sqrt{2}$, which has been shown to ensure convergence for rewards $\in [0,1]$ \cite{UCT}.
In the following, we will often omit the subscript $v$ when it is clear from the context.

\subsection{Preference-Based Monte Carlo Tree Search}
\label{sec:PB-MCTS}

\begin{figure}[t]
\centering
\includegraphics[width=0.9\columnwidth]{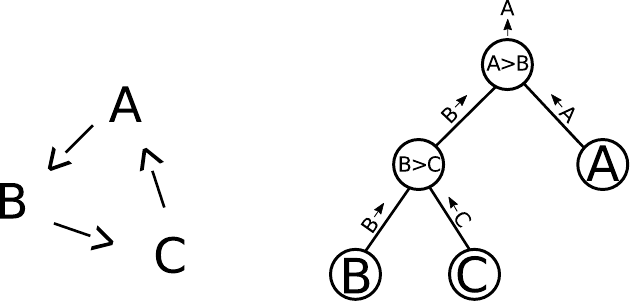}
\caption{Three nontransitive actions. The tree introduces a bias to solve nontransitivity.}
\label{fig:transitivity}
\end{figure}

A version of MCTS that uses preference-based feedback (\pbmcts{}) was recently introduced by \citet{joppen2018preference}. 
In 
this setting, agents receive rewards in form of preferences over states.
Hence, feedback about single states $s$ is not available, it can only be compared
to another state $s'$, i.e., $s\succ s'$ (dominance), $s'\succ s$, or $s\not\sim s'$ (incomparable).

An iteration of \pbmcts{} contains the same abstract steps like MCTS, but their realization differs.
First and foremost, it is impossible to use preference information on a vanilla MCTS iteration, since it only samples a single trajectory, whereas a second state
 is needed for a comparison.
Hence, \pbmcts{} does not select a single path per iteration but an entire subtree of the search tree. 
In each of its nodes, 
two actions are selected 
that can be compared to each other.
For the selection step, 
a modified version of the dueling bandit algorithm RUCB \cite{zoghi14} is used to select actions.

%
%

There are two 
main disadvantages with this approach:
\begin{enumerate}
	\item \emph{No transitivity} is used. Given ten actions, MCTS needs only $10$ iterations to have a first estimation of quality of each action. In the preference-based approach, each action has to be compared with each other action until a first complete estimation can be done. These are $(10 \cdot 9)/2 = 45$ iterations, i.e., in general the effort is quadratic in the number of actions. 
	\item A \emph{binary subtree} is needed to learn on each node of the currently best trajectory. Instead of a path of length $n$ for vailla MCTS, the subtree consists of $2^n-1$ nodes and $2^{n-1}$ trajectories instead of only one, 
	causing an 
	exponential blowup of \pbmcts{}'s search tree.
\end{enumerate}
Hence, we believe that \pbmcts{} does not make optimal use of available computing resources, since on a local perspective, transitivity information is lost, and on a global perspective, the desired asymmetric growth of the search tree is undermined by the need for selecting a binary tree.
Note that even in the case of a non-transitive domain, \pbmcts{} will nevertheless 
obtain a transitive policy, as illustrated in Figure~\ref{fig:transitivity}, where the circular preferences between actions \textsf{A}, \textsf{B}, and \textsf{C} can not be reflected in the resulting tree structure. 

\section{Ordinal Algorithms}
In this section we introduce three novel ordinal algorithms, two for the multi-armed bandit setting and one novel ordinal MCTS algorithm for markov decision processes.

\subsection{Borda Bandit}
\label{sec:O-MABAlg}
Our first bandit algorithm \omab{} is the base for the following algorithms and a very core contribution of this paper.
The main idea is to store all ordinal reward values per action and evaluate how good an action is depending on how probably it is that a random ordinal reward of those seen per arm is better than a random one of the other actions.
In the following we describe this idea in more detail.

\subsubsection{The Borda Score}
\label{sec:Borda}


The Borda score is based on the Borda count which has its origins in voting theory \cite{black1976partial}. 
Essentially, it estimates the probability of winning against a random competitor. 
In our case, $B(a)$ is the probability of action $a$ to win against any other action $b \neq a$ available (with tie correction):
$$ B(a) =  \frac{1}{| A|-1}\sum_{b\in A \setminus \{a\}} \Pr(a\succ b)$$
where
$$ \Pr(a\succ b) = \Pr(X_a > X_b) + \nicefrac{1}{2} \Pr(X_a = X_b)$$
and $X_i$ is the random variable responsible for sampling the ordinal rewards for arm $i$.
The true value for $\Pr(X_a > X_b)$ and $\Pr(X_a = X_b)$ is unknown but can be estimated empirically.
In dueling bandits or PB-MCTS, one picks two arms $a_i$ and $a_j$ and receives a direct sample for $\Pr(X_{a_i} > X_{a_j})$.
In this paper, we take a different approach by assuming availability of ordinal rewards per arm.
%
For each available action $a \in A$, we store the empirical density function $\hat{f}_{a}$ and the empirical cumulative density function $\hat{F}_{a}$ of all backpropagated ordinal rewards $o \in O$.
Given those functions, we can estimate $\Pr(X_a \succ X_b)$\citep{vargha}:
\begin{equation}
\begin{aligned}
\label{formula:ACalc}
&\hat{B}(a\succ b) \\
&=\int \Pr[X_{a} < o] \hat{f}_{b}(o) + \nicefrac{1}{2} \Pr[X_a = o] \hat{f}_{b}(o) do  \\
& = \sum_{i=2}^{|O|} \frac{\hat{F}_{a}(o_{n-1})+\hat{F}_{a}(o_n)}{2} \hat{f}_{b}(o_n)
\end{aligned}
\end{equation}

\subsubsection{Algorithm}
In comparison to \ucb, \omab{} uses the Borda score $\hat{B}$ as an exploitation term to choose the arm to play:
\begin{equation}
a^* = \max_{a \in A} \hat{B}(a) + 2 \Cp \sqrt{\frac{2 \ln n}{n(a)}}.
\label{eq:o-mab}
\end{equation}

To wrap it up there are two differences to the UCB implementation:
The selection of which arm to play uses a different exploitation term: $\hat{B}$; and instead of  updating a running average per action, here $\hat{f}_{a}$ and $\hat{F}_{a}$ gets updated according to the new ordinal reward.
We present bounds on the regret for this choice of actions in bandit algorithms:
\begin{theorem}For all $K>1$ and $O>1$, the expected regret after any number $n$ of plays of an ordinal bandit using (\ref{eq:o-mab}) is at most $\mathcal{O}(\sum_{i\in A^-}\big( \frac{\ln n}{\Delta_i}\big))$ where $A^-$ is the set of non optimal arms.
\end{theorem}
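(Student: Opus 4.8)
The plan is to mirror the classical \ucb{} regret analysis of Auer, Cesa-Bianchi and Fischer, with the empirical average $\bar{X}_j$ replaced by the Borda estimate $\hat{B}(a)$. First I would reduce the claim to a bound on the expected number of pulls of each suboptimal arm. Since the per-play regret is $\textit{regret}_a = B(a^*)-B(a)=\Delta_a$, the total expected regret after $n$ rounds is $\sum_{i\in A^-}\Delta_i\,\mathbb{E}[T_i(n)]$, where $T_i(n)$ counts the pulls of arm $i$. Hence it suffices to prove $\mathbb{E}[T_i(n)]=\mathcal{O}(\ln n/\Delta_i^2)$ for each $i\in A^-$; multiplying by $\Delta_i$ and summing then gives the stated $\mathcal{O}(\sum_{i\in A^-}\ln n/\Delta_i)$.

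Second, I would reuse the standard event decomposition. Arm $i$ is selected at round $t$ only if its index beats the optimal arm's, i.e.\ $\hat{B}(i)+c_t(T_i)\ge \hat{B}(a^*)+c_t(T_{a^*})$ with $c_t(s)=2\Cp\sqrt{2\ln t/s}$. As usual this forces at least one of: (a) $\hat{B}(a^*)\le B(a^*)-c_t(T_{a^*})$; (b) $\hat{B}(i)\ge B(i)+c_t(T_i)$; or (c) $B(a^*)<B(i)+2c_t(T_i)$. Event (c) becomes impossible once $T_i\ge \ell_i:=\lceil 32\Cp^2\ln n/\Delta_i^2\rceil$, since then $2c_t(T_i)\le\Delta_i$; note $\ell_i=\mathcal{O}(\ln n/\Delta_i^2)$ as required. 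It then remains to sum the tail probabilities of (a) and (b) over all relevant (time, count) pairs and show they contribute only a constant, exactly as the $\zeta$-type sum in the original proof — provided a suitable concentration statement for $\hat{B}$ is available.

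Third, and this is the genuinely new ingredient, I would establish concentration of $\hat{B}(a)$ around $B(a)$. Unlike $\bar{X}_j$, which is an i.i.d.\ average from a single arm, $\hat{B}(a)=\frac{1}{K-1}\sum_{b\ne a}\hat{B}(a\succ b)$ couples the empirical distributions of all arms, each pairwise term being a two-sample Mann--Whitney / Vargha--Delaney statistic built from $\hat{F}_a$ and $\hat{f}_b$. I would decompose $\hat{B}(a)-B(a)=\frac{1}{K-1}\sum_{b\ne a}(\hat{B}(a\succ b)-\Pr(a\succ b))$ and observe that the Borda functional is $1$-Lipschitz in each argument under the sup-norm, so that $|\hat{B}(a\succ b)-\Pr(a\succ b)|\le\|\hat{F}_a-F_a\|_\infty+\|\hat{F}_b-F_b\|_\infty$ (the tie-correction term $\tfrac12\Pr(X_a=o)$ is itself a difference of CDF values and stays within this bound). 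The Dvoretzky--Kiefer--Wolfowitz inequality controls $\|\hat{F}_b-F_b\|_\infty$ by $2\exp(-2T_b\varepsilon^2)$, a bound independent of the number of ordinal levels $O$, which is precisely why the final rate is uniform in $O$ and only the arm count $K>1$ and $O>1$ need be assumed.

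The main obstacle is exactly this cross-arm coupling: the deviation of $\hat{B}(a)$ is governed not only by $T_a$ but by the empirical-CDF error of \emph{every} other arm, whereas the exploration bonus $c_t(T_i)$ discounts only for $T_i$. The danger is that a very suboptimal arm $b$ is pulled merely $\Theta(\ln n/\Delta_b^2)$ times, leaving $\|\hat{F}_b-F_b\|_\infty\approx\Delta_b$, which could pollute the estimate of $B(a^*)-B(i)\approx\Delta_i$ when $\Delta_b\gg\Delta_i$. To close this I would work directly with the difference $\hat{B}(a^*)-\hat{B}(i)$ rather than each Borda score in isolation: the shared terms $\hat{B}(a^*\succ b)-\hat{B}(i\succ b)$ depend on $\hat{F}_b$ through the test function $h_{F_{a^*}}-h_{F_i}$, whose total variation I would try to bound by a quantity controlled by $\Delta_i$, so that the other arms' noise enters only at the tolerated scale. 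I expect the delicate bookkeeping to lie in verifying that this cancellation (or, failing a clean cancellation, a union bound exploiting that \ucb{}-style selection pulls every arm $\Omega(\ln n)$ times) keeps the aggregated cross-arm error dominated by the $1/\sqrt{T_i}$ bonus, possibly at the cost of an additive $\mathcal{O}(K)$ lower-order term that does not affect the leading $\mathcal{O}(\sum_{i\in A^-}\ln n/\Delta_i)$ order.
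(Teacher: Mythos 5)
Your overall skeleton is the right one for this theorem, and most of it is sound: reducing the Borda regret to $\sum_{i\in A^-}\Delta_i\,\mathbb{E}[T_i(n)]$ is valid for the regret notion defined in the paper, the three-event decomposition with threshold $\ell_i=\lceil 32\Cp^2\ln n/\Delta_i^2\rceil$ is the standard Auer-style argument, and your per-pair bound $|\hat{B}(a\succ b)-\Pr(a\succ b)|\le\|\hat{F}_a-F_a\|_\infty+\|\hat{F}_b-F_b\|_\infty$ is correct (by summation by parts, since the mid-CDF test function is monotone with range in $[0,1]$), as is the observation that DKW makes everything uniform in $|O|$. The genuine gap sits exactly where you flag it --- the cross-arm coupling --- and neither of the two repairs you float can close it. The cancellation route is provably false: the total variation of $h_{F_{a^*}}-h_{F_i}$, where $h_F(o_n)=\frac{F(o_{n-1})+F(o_n)}{2}$, is \emph{not} controlled by $\Delta_i$. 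Take $O=\{o_1<o_2<o_3\}$, let $a^*$ put mass $\nicefrac{1}{2}$ on each of $o_1,o_3$, let $i$ put mass $1$ on $o_2$, and let a third arm $b$ also put mass $1$ on $o_2$. Then every pairwise winning probability equals $\nicefrac{1}{2}$, so all Borda gaps are zero (and can be made arbitrarily small by perturbing $f_b$), yet $h_{F_{a^*}}-h_{F_i}=(\nicefrac{1}{4},0,-\nicefrac{1}{4})$ has total variation $\nicefrac{1}{2}$; an error of order $\varepsilon$ in $\hat{f}_b$ shifts $\hat{B}(a^*)-\hat{B}(i)$ by $\Theta(\varepsilon)$ with no cancellation at all. (Note the coupling already bites at $K=2$: there $\hat{B}(i)=\hat{B}(i\succ a^*)$ fluctuates with both $T_i$ and $T_{a^*}$, while the bonus $c_t(T_i)$ discounts only for $T_i$.)

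The fallback union bound fails for a quantitative reason. Since the index in (\ref{eq:o-mab}) discounts only for $n(a)$, forced exploration guarantees no more than $T_b=\Omega(\Cp^2\ln t)$ pulls of a third arm $b$ (an arm stops being forced once its bonus drops below the best index, i.e.\ after roughly $8\Cp^2\ln t$ pulls, and a clearly bad arm will receive no more than that). At the confidence level $t^{-4}$ that the Chernoff/union-bound step needs, DKW then gives only $\|\hat{F}_b-F_b\|_\infty\lesssim\sqrt{2\ln t/T_b}\approx 1/(2\Cp)$ --- a constant that never shrinks with $t$ and never drops below $\Delta_i$ for small gaps (with the paper's $\Cp=1/\sqrt{2}$ this floor is about $0.7$). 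Consequently the probabilities of your events (a)/(b) for arm $i$ cannot be made $t^{-4}$-summable using $T_i$ and $T_{a^*}$ alone, which is precisely what the classical argument requires; this is not curable by an additive $\mathcal{O}(K)$ lower-order term. A complete proof must do something your sketch does not: either modify the index so that the exploration bonus reflects the sample counts of \emph{all} arms entering $\hat{B}$, or run an explicit burn-in/epoch argument showing that the contaminated phase contributes an $n$-independent (though potentially $e^{\Omega(1/\Delta^2)}$-sized) additive constant which the $\mathcal{O}(\cdot)$ in the statement then absorbs. As written, your proposal correctly isolates the decisive difficulty but does not overcome it, so the proof is incomplete at its central step.
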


\begin{proof}See appendix.\footnote{The proof can be found at \url{http://tiny.cc/OMCTS_proof}}
	\end{proof}

\subsection{Hierarchical Borda Bandit}
\label{sec:OH-MAB}
The Borda Bandit \omab{} will favour to play that arm, which beats the other arms most often.
In the most simple setting of two arms, \omab{} will choose the arm that more often wins direct duels with the other arm, independent of how much the difference between the outcomes is.
In a lottery setting where one can choose to play or not to play, \omab{} will choose to play only if the chances of winning the lottery are above $50\%$.
Same for the medicine treatment: If you can choose between no treatment or treatment, \omab{} will choose treatment if the treatment was successful in more than $50\%$ of the cases.
Having people die from the treatment in $49\%$ would be fine for \omab{}, what is understandable since every distance measure is removed and there is no reason for \omab{} to argue that death is worse than a good treatment.
But it might be desirable to define some notion of distance, like death is much worse than everything else.
The following bandit algorithm \emph{Hierarchical Ordinal UCB}(\ohmab) provides parameters to define hierarchical preferences and thus different distance measures.

A \ohmab{} is parameterized by a tuple $(\bar{z}, H)$, where $\bar{z}$ is a critical value to check for significance (needed later), and a $d$-sized hierarchy of ordinal values $H \in \mathcal{P}(O)^d$ where $\mathcal{P}(O)$ refers to the power set of $O$.
In each hierarchy level $h \in \mathcal{P}(O)$ multiple elements of $O$ can be selected or not.
The function $M_h \in O\times O$ maps all elements of $O$ to their next bigger element in $O$ which is selected in $h$ or, in case of no bigger selected element existing, the highest element of $O$.
The selected elements now define a notion of distance:
Two elements $o$ and $o'$ are close-by in a hierarchy $h$, if $M_h(o) = M_h(o')$, or in other words are close by if there is no selected element between $o$ and $o'$ in $h$.
The main idea of \ohmab{} is the usage of one \omab{} agent per hierarchy $h$, where each agent $a_h$ perceives an ordinal reward signal $o$ as $M_h(o)$.

Selecting an arm $a\in A$ is done in an iterative manner going from the first $i=1$ to last hierarchy $i=d$.
The set of valid arms is initialized with all possible arms $\bar{A}=A$.
The current agent of hierarchy $h_i$ is used to identify the best action $a^i \in \bar{A}$ of all valid arms (picking the one that maximizes $b(a)$).
Now, each other valid action $\hat{a}\in \bar{A}\setminus \{a^i\}$ is tested whether $a^i$ is significant better than $\hat{a}$ using $\bar{z}$ as the critical value in a Mann-Whitney U test between those two arms when ignoring the exploration trade off:
Having $n^i$ and $\hat{n}$ being the amount of plays of arms $a^i$ and $\hat{a}$, the Mann-Whitney U value for $a^i$ and $\hat{a}$ can be derived using the borda score $\hat{B}(a^i\succ \hat{a})$:
$U_{a^i} = \hat{B}(a^i\succ \hat{a})\cdot n^i \hat{n}$ and $U_{\hat{a}} = \hat{B}(\hat{a}\succ a^i)\cdot n^i \hat{n}$ \cite{sprinthall1990basic}.
For large enough samples (we use $n^i > 3$, $\hat{n}>3$ and $n^i + \hat{n}>20$) U is approximately normally distributed and hence we can check for significance by testing if
\begin{equation}
\begin{aligned}
\label{formula:significanceHierarchy}
\hat{z}& > \frac{U_{a^i} - m_{U_{a^i}}}{\sigma_{U_{a^i}}}\\
\hat{z}& > \frac{U_{a^i} - \frac{n^i \hat{n}}{2}}{\sqrt{\frac{n^i \hat{n} (n^i+ \hat{n}+1}{12}}}
\end{aligned}
\end{equation}
where $m$ and $\sigma$ are the mean and standard deviation of $U$\citep{vargha,sprinthall1990basic}.

If a valid arm $\hat{a}$ is significant worse that the best arm $a^i$, it is removed from the list of valid arms $\bar{A} = \bar{A}\setminus \{\hat{a}\}$.
If, after each valid arm has been tested for significance, $a^i$ is the only valid arm left, it is returned as the arm to be played by \ohmab.
Otherwise, the next hierarchy $i+1$ continues.
If no hierarchy is left ($i = d$), $a^i$ is returned as to be played, too.
After a arm $a$ has been played and a ordinal reward $o$ has been received, each of the bandits $a_h$ is updated with $M_h(o)$.

To give further insight and motivation, we inspect on how \ohmab{} could be used for the medicine treatment setting:
The main problem for \ucb in this setting is to avoid \emph{dead patients} without defining a clear trade off.
Here, we can use the first hierarchy level to do exactly that by only selecting \emph{dead patients} in $h_1$.
Hence, this bandit only perceives two rewards: \emph{patient dead} and \emph{patient not dead} and will therefore favor those actions that lead to the least amount of dead patients.
Depending on the samples seen and $\hat{z}$ this first hierarchy filters out those arms with significant more dead patients.
The second and last hierarchy would select the complete set of ordinal values to allow the most fine grained level of optimization.


\subsection{Ordinal Monte Carlo Tree Search}
\label{sec:O-MCTS}
In this section, we introduce \omcts, an MCTS variant which only relies on 
ordinal information to learn a policy.
We derive an ordinal MCTS algorithm by using \omab{} instead of \ucb{} as the tree policy.

\emph{Ordinal Monte Carlo tree search} (\omcts) proceeds like conventional MCTS as introduced in Section~\ref{sec:MCTS}, but 
replaces the average value $\bar{X}_v(a)$ in~\eqref{eq:uct}
 with the Borda score $B_v(a)$ of an action $a$ in node $v$, where $v$ now defines the current state or node in the tree.
Here, each arm is rated according to its mean performance against the other arms.
To our knowledge, Borda score has not been used in MCTS before, even tough several papers have investigated its use in dueling bandit algorithms \cite{DuelingBandits,urvoy2013generic,jamieson2015sparse}.
To calculate the Borda score for each action in a node, \omcts{} stores the backpropagated ordinal values, and estimates pairwise preference probabilities $P_v(a\pref b)$ from these data.
Hence, it is not necessary to do multiple rollouts in the same iteration as in \pbmcts{} because
current rollouts can be directly compared to previously observed ones.

Note that $P_v(a \succ b)$ can only be estimated if each action was visited at least once.
Hence, similar to other MCTS variants, we enforce this by always selecting non-visited actions in a node first.

\subsection{Discussion}
Although the changes from MCTS to \omcts{} are comparably small, the algorithms 
have very different characteristics.
In this section, we highlight some of the differences between \omcts{} and MCTS.
\paragraph{Different Bias}
As mentioned previously, MCTS has been blamed for behaving cowardly, by preferring
safe but unyielding actions over actions that have some risk but will in the long
run result in higher rewards.
As an example, consider Figure~\ref{fig:loss}, which shows in its bottom row the distribution of trajectory values for two actions over a range of possible rewards.
One action (circles) has a mediocre quality with low deviation, whereas
the other (stars) is sometimes worse but often better than the first one.
Since MCTS prioritizes the stars only if the average is above the average of circles, MCTS would often choose the safe, mediocre action.
In the literature one can find many ideas to tackle this problem, like \mixmax{} backups \citep{khalifa2016modifying} or adding domain knowledge (e.g., by giving a direct bonus to actions that should be executed \cite{perez2018general,YOLOBOT}).
\omcts{} takes a different point of view, by not comparing average values but by comparing how often stars are the better option than circles and vice versa. As a result, it would prefer the star action, which is preferable in 70\% of the games.
Please note that the given example can be inverted such that MCTS takes the right choice instead of \omcts{}.

\begin{figure}[t]
\centering
\includegraphics[width=\columnwidth]{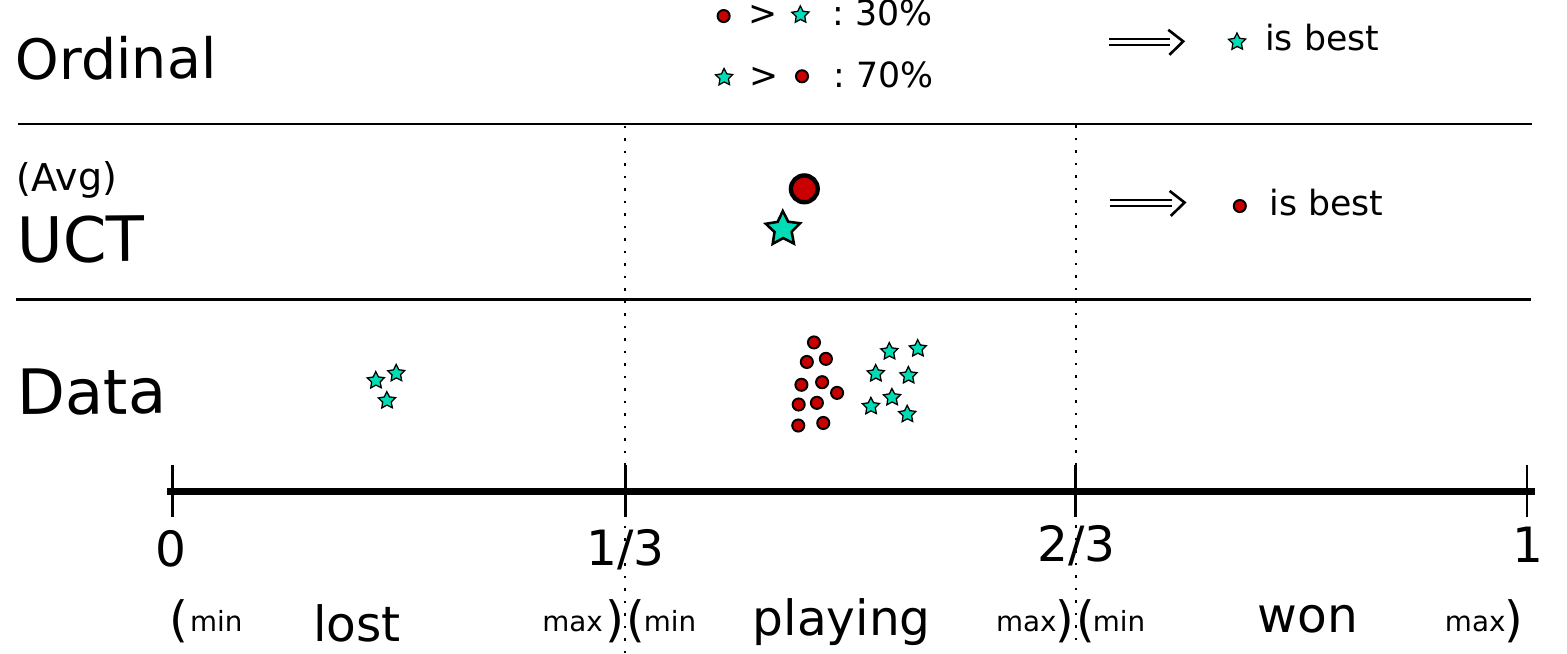}
\caption{Two actions with different distributions.}
\label{fig:loss}
\end{figure}

\paragraph{Hyperparameters and Reward Shaping}
When trying to solve a problem with MCTS (and other algorithms, too), rewards can be seen as hyperparameters that can be tuned manually to make an algorithm work as desired.
In theory this can be beneficial since you can tweak the algorithm with many parameters. 
In practice it can be very painful since there often is an overwhelming number of hyperparameters to tune this way.
This tuning process is called \emph{reward shaping}.
In theory, one can shape the state rewards until a greedy search is able to perform optimal on any problem.

\omcts{} reduces the number of hyperparameters by only asking for ordinal rewards; which is like asking for a ranking of states instead of real numbers for each state.
This limits the possibilities of reward shaping but induces a fixed bias using the borda method.

\paragraph{Computational Complexity}
Clearly, a na\"ive computation of $\hat{B}$ is computationally more expensive than MCTS' calculation of a running average.
We hence want to point out that once a new ordinal reward is seen it is possible to incrementally update the current value of $\hat{B}$ instead of calculating it again from scratch. 
In our experiments, updating the Borda score needed $3$ to $20$ times more time than updating the average (depending on the size of $O$ and $A$).
These values do only show the difference in updating $\hat{B}$ in comparison to updating the running average, not the complete algorithms (where the factor is much lower, mostly depending on the runtime of the forward model).

\section{Experimental Setup}
\label{sec:Setup}

The experiments are split in two major sections: Bandits and Tree Search.
To be able to compare numerical baseline algorithms with ordinal algorithms, we first define numerical rewards and derive ordinal rewards from there.

\subsection{Bandit Setup}
For the bandit setup, we inspect the medicine treatment problem from above:
Imagine patients can be dead or alive after a medical treatment.
In case of being alive, a continuous scale of wellbeing can differentiate the quality of the treatment, while being dead is the worst possible outcome.
This can be modeled in a numerical score by defining death as a reward of $0$ and everything else in $(0,1]$.
The aim is to identify a good treatment with as less dead patients as possible.
This on purpose is defined to not look for any dead-patient/wellbeing trade-off.
There is a clear preference to minimize dead patients at first and then maximize the wellbeing score.
We want to emphasise that this optimization can not be modeled as a numerical score but is of a higher dimension.
Hence UCB1 will not be able to find that optimum but will only be able to maximize the expected numerical reward.
Furthermore, O-UCB and MultiSBM will also not directly minimize the amount of death patients too, since they will maximize the pull of the arm that beats the other arms in average.
Nevertheless this setup can prove two main statements:
First, \ohmab{} can maximize the wellbeing score while minimizing the amount of deaths without defining a numerical trade off between those targets and 
second, \omab{} can optimize the wellbeing score directly. 
The latter optimization can be compared to \ucb and \multi{} in terms of convergence speed.

Notice that we use a parameterized version of all bandit algorithms with parameter $c$, which trades off exploration and exploitation (compare Formula~\ref{eq:ucb} and Formula~\ref{eq:uct}).

In our bandit setting, we have the four different treatments available (four actions):
\begin{itemize}
    \item A $20\%$ death (r=$0$) / $80\%$ maximal wellbeing (r=$1$) treatment (a good but maybe killing treatment)
    \item A $80\%$ death (r=$0$) / $20\%$ maximal wellbeing (r=$1$) treatment (a very risk and worse treatment than the one above)
    \item A no-treatment baseline (r=$0.6$)
    \item A treatment that slightly increases the level of wellbeing of patients (r=$0.7$)
\end{itemize}

\noindent
Using this bandit, we test four different algorithms: \omab, \ohmab, \ucb{} and \multi.
Each agent has 500 action pulls available per experiment, with each experiment being averaged over 20 runs.
We repeat each experiment with the parameter $c\in(0.1,0.2, 0.4, 0.6, 0.8, 1, 1.2, 1.4, 1.6, 1.8, 2)$ and compare the best results of each agent.
\omab, \ohmab{} and \multi{} can derive their preferences from the wellbeing score described above. 
As described in Section~\ref{sec:OH-MAB}, two hierarchies are used for \ohmab, the first to define that \emph{dead} is worse than everything else (or $v=0 << v>0$) and a full second hierarchy, where all ordinal values are considered. 
Hence \ohmab{} should be able to prioritize those actions that do not lead to death and optimizes the remaining actions as \omab{} would do, too. 
We use $\hat{z}=0.65$ as the critical z value to check for significance in \ohmab.

\subsection{Bandit Results}
The target is to minimize dead patients. In cases of ties, the average wellbeing score is used as a tie breaker.
First, we inspect Figure~\ref{fig:bandits1}, where the amount of dead patients is shown over time for the best $c$ parameters per algorithm. 
It can clearly be seen, that \ohmab is the only algorithm that converges towards a low number of deaths.
This is to no surprise, since the other algorithms maximize the wellbeing value directly.

A plot showing the best wellbeing values per time is shown at Figure~\ref{fig:bandits2} for the $c$ parameters that increased this score the best.
Here one can see, that O-UCB as well as UCB are able to maximize the value.
MultiSBM is a bit behind, since it needs to learn from pair of actions and is not able to abstract information between arm pairs as good as \ucb{} or \omab{} can do.
In contrast to MultiSBM, O-UCB is able to use a single feedback value of one arm and can compare it to any other value of any other arm.

Lastly, we present an overview showing the influence of parameter $c$ in Table~\ref{tbl:banditResults} by displaying the average wellbeing score and amount of deaths per algorithm and parameter.
Inspecting $c\geq 0.2$ there is an interesting different between \ucb{} and \omab{} shown:
\omab{} keeps the amount of deaths at around 83 with a decrease of wellbeing when increasing exploration.
In contrast to that, \ucb{} reduces both, wellbeing score and amount of deaths.



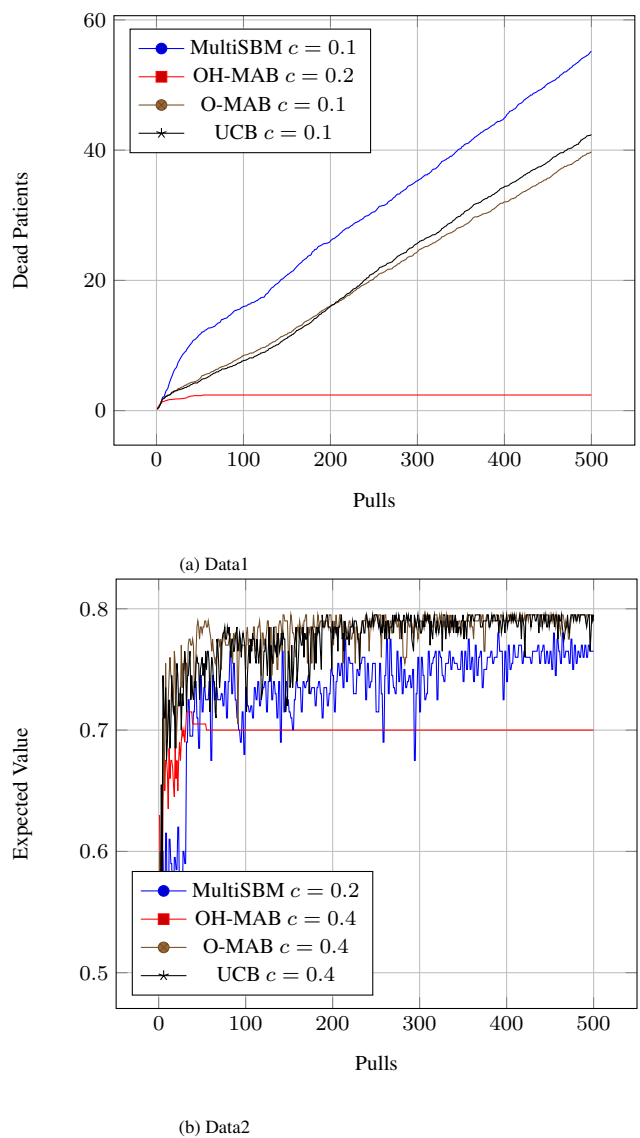
\begin{figure}[htb]
	\centering
	\begin{subfigure}{0.45\textwidth}
		\centering
		\begin{tikzpicture}
  \begin{axis}[
    xlabel={Pulls},
    ylabel={Dead Patients},
    grid=major,
    legend pos=north west
  ]
\addplot table [x=X, y=0S_MultiSBM_0.1, mark=none, col sep=semicolon, /pgf/number format/read comma as period] {banditsReport.csv}; \addlegendentry{MultiSBM $c=0.1$}
%
\addplot table [x=X, y=0S_O_H_MAB_0.2, mark=none, col sep=semicolon, /pgf/number format/read comma as period] {banditsReport.csv}; \addlegendentry{OH-MAB $c=0.2$}
\addplot table [x=X, y=0S_O-MAB_0.1, mark=none, col sep=semicolon, /pgf/number format/read comma as period] {banditsReport.csv}; \addlegendentry{O-MAB $c=0.1$}
%
\addplot table [x=X, y=0S_UCB_0.1, mark=none, col sep=semicolon, /pgf/number format/read comma as period] {banditsReport.csv}; \addlegendentry{UCB $c=0.1$}
%
\end{axis}
\end{tikzpicture}
		\caption{Data1}
		\label{fig:bandits1}
	\end{subfigure}
	
	\begin{subfigure}{0.45\textwidth}
		\centering
		\begin{tikzpicture}
  \begin{axis}[
    xlabel={Pulls},
    ylabel={Expected Value},
    grid=major,
    legend pos=south west
  ]
\addplot table [x=X, y=E_MultiSBM_0.2, mark=none,smooth , col sep=semicolon, /pgf/number format/read comma as period] {banditsReport.csv}; \addlegendentry{MultiSBM $c=0.2$}
%
\addplot table [x=X, y=E_O_H_MAB_0.2, mark=none,smooth , col sep=semicolon, /pgf/number format/read comma as period] {banditsReport.csv};	\addlegendentry{OH-MAB $c=0.4$}
%
\addplot table [x=X, y=E_O-MAB_0.4, mark=none,smooth , col sep=semicolon, /pgf/number format/read comma as period] {banditsReport.csv}; \addlegendentry{O-MAB $c=0.4$}
%
\addplot table [x=X, y=E_UCB_0.4, mark=none,smooth , col sep=semicolon, /pgf/number format/read comma as period] {banditsReport.csv}; \addlegendentry{UCB $c=0.4$}
%
\end{axis}
\end{tikzpicture}
		\caption{Data2}
		\label{fig:bandits2}
	\end{subfigure}
	\caption{Bandit results}
\end{figure}

\begin{table}[htbp]
\centering
\tabcolsep=0.11cm
	\begin{tabular}{c||rl|rl|rl|rl}
		\begin{sideways}$C$ Parameter\end{sideways}
		&\multicolumn{2}{c|}{\begin{sideways}UCB\end{sideways}}
		&\multicolumn{2}{c|}{\begin{sideways}OH-UCB\end{sideways}}
		&\multicolumn{2}{c|}{\begin{sideways}O-UCB\end{sideways}}
		&\multicolumn{2}{c}{\begin{sideways}MultiSBM\end{sideways}}
		\\ 
		
		&\begin{sideways}Value\end{sideways}
		&\begin{sideways}Deaths\end{sideways}
		&\begin{sideways}Value\end{sideways}
		&\begin{sideways}Deaths\end{sideways}
		&\begin{sideways}Value\end{sideways}
		&\begin{sideways}Deaths\end{sideways}
		&\begin{sideways}Value\end{sideways}
		&\begin{sideways}Deaths\end{sideways}
		\\ 
\hline
\hline
0.1
& 0.739 & 42.4 
& 0.699 & 2.6 
& 0.737 & 39.75 
& 0.725 & 55.15 
\\ 
\hline
0.2
& 0.75 & 56 
& 0.698 & \textbf{2.4} 
& 0.754 & 61.3 
& 0.73 & 67.15 
\\
\hline
0.4
& 0.78 & 80.45 
& 0.699 & 4.6 
& \textbf{0.784} & 84.45 
& 0.709 & 67.4 
\\
\hline
0.6
& 0.766 & 78.3 
& 0.696 & 4.75 
& 0.776 & 82.75 
& 0.683 & 72.1 
\\
\hline
0.8
& 0.761 & 75.3 
& 0.695 & 7.6 
& 0.773 & 83.8 
& 0.659 & 82.1 
\\
\hline
1.0
& 0.737 & 73.75 
& 0.693 & 12.8 
& 0.755 & 83.65 
& 0.641 & 89.3 
\\
\hline
1.2
& 0.732 & 71.8 
& 0.695 & 16.6 
& 0.753 & 81.45 
& 0.645 & 87.25 
\\
\hline
1.4
& 0.727 & 69.8 
& 0.689 & 20.55 
& 0.74 & 81.65 
& 0.628 & 95.85 
\\
\hline
1.6
& 0.713 & 70.5 
& 0.686 & 25 
& 0.729 & 82.8 
& 0.624 & 97.25 
\\
\hline
1.8
& 0.703 & 71.5 
& 0.685 & 29.85 
& 0.722 & 82.65 
& 0.623 & 98.6 
\\
\hline
2.0
& 0.7 & 71.45 
& 0.68 & 33.3 
& 0.714 & 82.85 
& 0.617 & 100.85 
\\
\hline
\end{tabular}
\caption{The average wellbeing value and amount of deaths per algorithm and $c$ parameter averaged over 20 runs.}
\label{tbl:banditResults}
\end{table}

\subsection{Tree Search Setup}

We test the three algorithms described above (MCTS, \omcts{} and \pbmcts{}) using the General Video Game AI (GVGAI) framework \cite{perez2016general}.
As additional benchmarks we add \mixmax{} (Q parameter set to $0.25$) as an MCTS variation that was suggested by \cite{khalifa2016modifying} to tackle the cowardly behavior, and \textsc{Yolobot}, a state of the art GVGAI agent \cite{YOLOBOT,perez2018general}.
GVGAI has implemented a variety of different video games and provides playing agents with a unified interface to simulate moves using a forward model.
Using this forward model is expensive so that simulations take a lot of time.
We use the number of calls to this forward model as a computational budget.
In comparison to using the real computation time, it is independent of specific hardware, algorithm implementations, and side effects such as logging data.

Our algorithms are given access to the following pieces of information provided by the framework:
\begin{description}
	\item[\emph{Available actions}:] The actions the agent can perform in a given state
	\item[\emph{Game score}:] The score of the given state $\in \mathbb{N}$. Depending on the game this ranges from $0$ to $1$ or $-1000$ to $10000$.
	\item[\emph{Game result}:] The result of the game: \emph{won}, \emph{lost} or \emph{running}.
	\item[\emph{Simulate action}:] The forward model. It is stochastic, e.g., for enemy moves or random object spawns.
\end{description}
\subsubsection{Heuristic Monte Carlo Tree Search}
The games in GVGAI have a large search space with $5$ actions and up to $2000$ turns.
Using vanilla MCTS, one rollout may use a substantial amount of time, since up to $2000$ moves have to be made to reach a terminal state.
To achieve a good estimate, many rollouts have to be simulated.
Hence it is common to stop rollouts early at non-terminal states, using a heuristic to estimate the value of these states.
In our experiments, we use this variation of MCTS, adding the maximal length for rollouts \textit{RL} as an additional parameter. 
The heuristic value at non-terminal nodes is computed in the same way as the terminal reward (i.e., it essentially corresponds to the score at this state of the game).
\subsubsection{Mapping Rewards to $\mathbb{R}$}
The objective function has two dimensions: on the one hand, the agent needs to win the game by achieving a certain goal, on the other hand, the agent also needs to maximize its score. 
Winning is more important than getting higher scores.

Since MCTS needs its rewards being $\in \mathbb{R}$ or even better $\in [0,1]$, the two-dimensional target function needs to be mapped to one dimension, in our case for comparison and ease of tuning parameters into $[0, 1]$.
%
%
Knowing the possible scores of a game, the score can be normalized by $r_{norm} = (r-r_{min})/(r_{max}-r_{min})$ with $r_{max}$ and $r_{min}$ being the highest and lowest possible score.

For modeling the relation  \emph{lost} $\prec$ \emph{playing} $\prec$ \emph{won} which must hold for all states, we split 
the interval $[0,1]$ into three equal parts 
(cf.~also the axis of Figure~\ref{fig:loss}):
  \begin{equation}
  	r_{mcts} = \frac{r_{norm}}{3} + 
    \begin{cases}
      0, & \text{if \it lost} \\
      \frac{1}{3}, & \text{if \it playing} \\
      \frac{2}{3}, & \text{if \it won}.
    \end{cases}
    \label{eq:rewards}
  \end{equation}
This is only one of many possibilities to map the rewards to $[0,1]$, but it is an obvious and straight-forward approach. 
Naturally, the results for the MCTS techniques, which use this reward, will change when a different reward mapping is used, and their results can probably be improved by shaping the reward. In fact, one of the main points of our work is to show that for 
%
\omcts{} (as well as for \pbmcts{}) no such reward shaping is necessary because these algorithms do not rely on the numerical information. In fact, for them, 
the mapped linear function with $a\succ b \Leftrightarrow r_{mcts}(a) > r_{mcts}(b)$ is equivalent to the preferences induced by the two-dimensional feedback.

\subsubsection{Selected Games}
GVGAI provides users with many games.
Doing an evaluation on all of them is not feasible.
Furthermore, some results would exhibit erratic behavior,
since the tested algorithms (except for \textsc{Yolobot}) are not suitable for solving some of the games.
For example, true rewards often are very sparse, and the agent has to be guided in some way to reliably 
solve the game.

For this reason, we manually played all the games and selected a variety of interesting, and not too complex games with different characteristics, which we believed to be solvable for the tested algorithms:
\begin{itemize}
	\item \game{Zelda}: The agent can hunt monsters and slay them with its sword. It wins by finding the key and taking the door.
	\item \game{Chase}: The agent has to catch all animals which flee from the agent. Once an animal finds a caught one, it gets angry and chases the agent. If the agent get caught this way, the game is lost.
	\item \game{Whackamole}: The agent can collect mushrooms which spawn randomly. A cat helps it in doing so. The game is won after 2000 time steps or lost if agent and cat collide.
	\item \game{Boulderchase}: The agent can dig through sand to a door that opens after it has collected ten diamonds. Monsters chase it through the sand turning sand into diamonds. It may be very hard for a MCTS agent to solve this game.
	\item \game{Surround}: The agent can win the game at any time by taking a specific action, or collect points by moving while leaving a snake-like trail. A moving enemy also leaves a trail. 
	The agent shall not collide with trails.
	\item \game{Jaws}: The agent controls a submarine, which is hunted by a shark. It can shoot fish giving points and leaving an item behind. Once 20 items are collected, a collision with the shark gives a large number of points, 
	otherwise it
	 loses the game. Colliding with fish always loses the game. The fish spawn randomly on 6 specific positions.
	\item \game{Aliens}: The agent can only move from left to right and shoot upwards. Aliens come flying from top to bottom throwing rocks on the agent. For increasing the score, the agent can shoot the aliens or shoot disappearing blocks.
\end{itemize}

The number of iterations that can be performed by the algorithms depends
on the computational budget of calls to the forward model.
We tested the algorithms with $250$, $500$, $1000$ and $10000$ forward model uses (later called \emph{budget}). Thus, in total, we experimented with $28$ problem settings ($7$ domains $\times$ $4$ different budgets).

\subsubsection{Tuning Algorithms and Experiments}
All MCTS algorithms have two parameters in common, the
\emph{exploration trade-off} $\Cp$ and \emph{rollout length} $RL$.
For $RL$ we tested 4 different values: $5, 10, 25$ and $50$, and
for $\Cp$ we tested 9 values from $0$ to $2$ in steps of size $0.25$.
In total, these are 36 configurations per algorithm.
To reduce variance, we have repeated each experiment 40 times.
Overall,  
4 algorithms with 36 configurations were run 40 times on 28 problems, 
resulting in 161280 games played for tuning.

Additionally, we compare the algorithms to \textsc{Yolobot}, a highly competitive GVGAI agent that won several challenges \cite{YOLOBOT,perez2018general}.
\textsc{Yolobot} is able to solve games none of the other five algorithms can solve.
Note that \textsc{Yolobot} is designed and tuned to act within a 20ms time limit.
Scaling and even increasing budget might lead to worse and unexpected behavior.
Still it is added for sake of comparison and interpretability of strength.
For \textsc{Yolobot} each of the $28$ problems is played $40$ times, which leads to $1120$ additional games or $162400$ games in total.\footnote{Results are available at \url{https://github.com/Muffty/OMCTS_Appendix}}

We are mainly interested on 
how well the different algorithms perform on the problems, given optimal tuning per problem.
To give an answer, we show the performance of the algorithms per problem in percentage of wins and obtained average score.
We do a Friedmann test on average ranks of those data with a posthoc Wilcoxon signed rank test to test for significance \cite{demvsar2006statistical}.
Additionally, we show and discuss the performance of all parameter configurations.

\begin{table}[htbp]
\centering
\small
\tabcolsep=0.11cm
	\begin{tabular}{cc||r|r|r|r|r}
		\begin{sideways}Game\end{sideways}&\begin{sideways}Time\end{sideways}
		&\multicolumn{1}{c|}{\begin{sideways}\smaller \omcts{}\end{sideways}}
		&\multicolumn{1}{c|}{\begin{sideways}\smaller MCTS\end{sideways}}
		&\multicolumn{1}{c|}{\begin{sideways}\smaller \textsc{Yolo}-\end{sideways} \begin{sideways}\smaller \textsc{bot}\end{sideways}}
		&\multicolumn{1}{c|}{\begin{sideways}\smaller \pbmcts{}\end{sideways}}
		&\multicolumn{1}{c}{\begin{sideways}\smaller \mixmax{} \end{sideways}}
		\\ 
\hline
\hline
\multirow{8}{*}{\begin{sideways}Jaws\end{sideways}}
&\multirow{ 2}{*}{$10^4$}
&\textbf{     100\%}&\textbf{     100\%}&      27.5\%&      80\%&      67.5\% \\ 
 &&\textbf{    1083.8}&     832.7&     274.7&     895.7&     866.8\\\cline{2-7}
&\multirow{ 2}{*}{$10^3$}
&      92.5\%&\textbf{      95\%}&      35\%&      52.5\%&      65\% \\ 
 &&\textbf{    1028.2}&     958.9&     391.0&     788.5&     736.4\\\cline{2-7}
&\multirow{ 2}{*}{500}
&      85\%&\textbf{      90\%}&      65\%&      50\%&      52.5\% \\ 
 &&     923.4&\textbf{    1023.1}&     705.7&     577.6&     629.0\\\cline{2-7}
&\multirow{ 2}{*}{250}
&\textbf{      85\%}&\textbf{      85\%}&      32.5\%&      37.5\%&      37.5\% \\ 
 &&\textbf{    1000.9}&     997.6&     359.6&     548.8&     469.0\\
\hline
\hline
\multirow{8}{*}{\begin{sideways}Surround\end{sideways}}
&\multirow{ 2}{*}{$10^4$}
&\textbf{     100\%}&\textbf{     100\%}&\textbf{     100\%}&\textbf{     100\%}&\textbf{     100\%} \\ 
 &&\textbf{      81.5}&      71.0&      81.2&      64.3&      57.6\\\cline{2-7}
&\multirow{ 2}{*}{$10^3$}
&\textbf{     100\%}&\textbf{     100\%}&\textbf{     100\%}&\textbf{     100\%}&\textbf{     100\%} \\ 
 &&\textbf{      83.0}&      80.8&      77.3&      40.8&      25.0\\\cline{2-7}
&\multirow{ 2}{*}{500}
&\textbf{     100\%}&\textbf{     100\%}&\textbf{     100\%}&\textbf{     100\%}&\textbf{     100\%} \\ 
 &&\textbf{      84.6}&      61.8&      83.3&      26.3&      17.3\\\cline{2-7}
&\multirow{ 2}{*}{250}
&\textbf{     100\%}&\textbf{     100\%}&\textbf{     100\%}&\textbf{     100\%}&\textbf{     100\%} \\ 
 &&\textbf{      83.4}&      64.7&      76.1&      14.3&      10.3\\
\hline
\hline
\multirow{8}{*}{\begin{sideways}Aliens\end{sideways}}
&\multirow{ 2}{*}{$10^4$}
&\textbf{     100\%}&\textbf{     100\%}&\textbf{     100\%}&\textbf{     100\%}&\textbf{     100\%} \\ 
 &&\textbf{      82.4}&      81.6&      81.5&      81.8&      77.0\\\cline{2-7}
&\multirow{ 2}{*}{$10^3$}
&\textbf{     100\%}&\textbf{     100\%}&\textbf{     100\%}&\textbf{     100\%}&\textbf{     100\%} \\ 
 &&      79.7&      78.4&\textbf{      82.2}&      76.9&      76.4\\\cline{2-7}
&\multirow{ 2}{*}{500}
&\textbf{     100\%}&\textbf{     100\%}&\textbf{     100\%}&\textbf{     100\%}&\textbf{     100\%} \\ 
 &&      78.0&      77.3&\textbf{      81.1}&      77.2&      76.0\\\cline{2-7}
&\multirow{ 2}{*}{250}
&\textbf{     100\%}&\textbf{     100\%}&\textbf{     100\%}&\textbf{     100\%}&\textbf{     100\%} \\ 
 &&      77.7&      77.1&\textbf{      79.3}&      75.8&      74.8\\
\hline
\hline
\multirow{8}{*}{\begin{sideways}Chase\end{sideways}}
&\multirow{ 2}{*}{$10^4$}
&\textbf{      87.5\%}&      80\%&      50\%&      67.5\%&      37.5\% \\ 
 &&\textbf{       6.2}&       6.0&       4.8&       5.2&       3.9\\\cline{2-7}
&\multirow{ 2}{*}{$10^3$}
&      60\%&      50\%&\textbf{      70\%}&      30\%&      17.5\% \\ 
 &&       4.8&       4.8&\textbf{       5.1}&       3.7&       2.6\\\cline{2-7}
&\multirow{ 2}{*}{500}
&      55\%&      45\%&\textbf{      90\%}&      27.5\%&      12.5\% \\ 
 &&       4.9&       4.5&\textbf{       5.5}&       2.9&       2.1\\\cline{2-7}
&\multirow{ 2}{*}{250}
&      40\%&      32.5\%&\textbf{      90\%}&      17.5\%&       7.5\% \\ 
 &&       4.2&       4.1&\textbf{       5.6}&       2.5&       2.6\\
\hline
\hline
\multirow{8}{*}{\begin{sideways}Boulderchase\end{sideways}}
&\multirow{ 2}{*}{$10^4$}
&      62.5\%&      75\%&      45\%&\textbf{      82.5\%}&      30\% \\ 
 &&      23.7&      22.1&      18.8&\textbf{      27.3}&      20.1\\\cline{2-7}
&\multirow{ 2}{*}{$10^3$}
&      50\%&      32.5\%&\textbf{      52.5\%}&      40\%&      22.5\% \\ 
 &&\textbf{      22.8}&      18.6&      21.8&      18.1&      16.2\\\cline{2-7}
&\multirow{ 2}{*}{500}
&\textbf{      47.5\%}&      30\%&      35\%&      32.5\%&      15\% \\ 
 &&\textbf{      24.7}&      20.2&      18.3&      19.4&      14.4\\\cline{2-7}
&\multirow{ 2}{*}{250}
&      40\%&      40\%&\textbf{      60\%}&      17.5\%&      15\% \\ 
 &&      20.9&      20.1&\textbf{      21.7}&      14.7&      15.3\\
\hline
\hline
\multirow{8}{*}{\begin{sideways}Whackamole\end{sideways}}
&\multirow{ 2}{*}{$10^4$}
&\textbf{     100\%}&\textbf{     100\%}&      75\%&      97.5\%&      75\% \\ 
 &&\textbf{      72.5}&      44.4&      37.0&      60.1&      48.5\\\cline{2-7}
&\multirow{ 2}{*}{$10^3$}
&\textbf{     100\%}&\textbf{     100\%}&      55\%&      77.5\%&      65\% \\ 
 &&\textbf{      64.0}&      41.8&      33.9&      43.9&      39.0\\\cline{2-7}
&\multirow{ 2}{*}{500}
&\textbf{     100\%}&\textbf{     100\%}&      57.5\%&      70\%&      52.5\% \\ 
 &&\textbf{      59.5}&      50.0&      29.0&      38.1&      35.4\\\cline{2-7}
&\multirow{ 2}{*}{250}
&      97.5\%&\textbf{     100\%}&      50\%&      65\%&      52.5\% \\ 
 &&\textbf{      54.8}&      45.9&      28.5&      35.1&      26.6\\
\hline
\hline
\multirow{8}{*}{\begin{sideways}Zelda\end{sideways}}
&\multirow{ 2}{*}{$10^4$}
&\textbf{      97.5\%}&      87.5\%&      95\%&      90\%&      70\% \\ 
 &&       8.3&       7.4&       3.8&\textbf{       9.6}&       8.1\\\cline{2-7}
&\multirow{ 2}{*}{$10^3$}
&      80\%&      85\%&\textbf{      87.5\%}&      57.5\%&      42.5\% \\ 
 &&\textbf{       8.8}&       7.5&       5.3&       8.6&\textbf{       8.8}\\\cline{2-7}
&\multirow{ 2}{*}{500}
&      62.5\%&      75\%&\textbf{      77.5\%}&      50\%&      35\% \\ 
 &&       8.6&       8.2&       4.6&\textbf{       8.8}&       7.8\\\cline{2-7}
&\multirow{ 2}{*}{250}
&      55\%&      55\%&\textbf{      70\%}&      45\%&      30\% \\ 
 &&\textbf{       8.4}&       7.8&       4.4&       8.0&       7.2\\
\hline
\hline
\multicolumn{2}{c||}{$\emptyset$ Rank}
&       1.6
&       2.5
&       2.6
&       3.5
&       4.7
\end{tabular}
\caption{The results of algorithms tuned per row.}
\label{tbl:reslt}
\end{table}

\subsection{Tree Search Results}
\label{sec:Experiments}
%
Table~\ref{tbl:reslt} shows the best win rate and the corresponding average score of each algorithm, averaged over $40$ runs
for each of the $36$ different parameter settings.
In each row, the best values for the win rate and the average score are shown in bold, and a ranking of the algorithms is computed. The resulting average ranks 
are shown in the last line. 
We use 
a Friedmann test and a posthoc Wilcoxon signed rank test as an indication for significant differences in performance.
The results of the latter (with a significance level of $99\%$) are shown in Figure~\ref{fig:wilcoxonAll}.

\begin{figure}[htb]
	\centering
	\begin{subfigure}{\columnwidth}
		\centering
		\includegraphics[width=1\columnwidth]{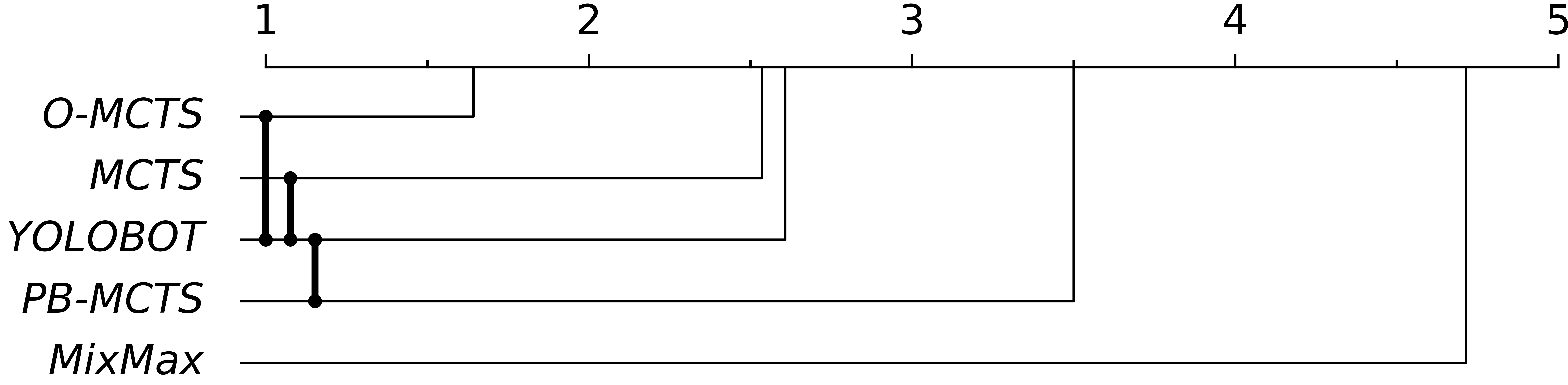}
		\caption{All game runs. Data from Table~\ref{tbl:reslt}}
		\label{fig:wilcoxonAll}
	\end{subfigure}
	
	\begin{subfigure}{\columnwidth}
		\centering
		\includegraphics[width=1\columnwidth]{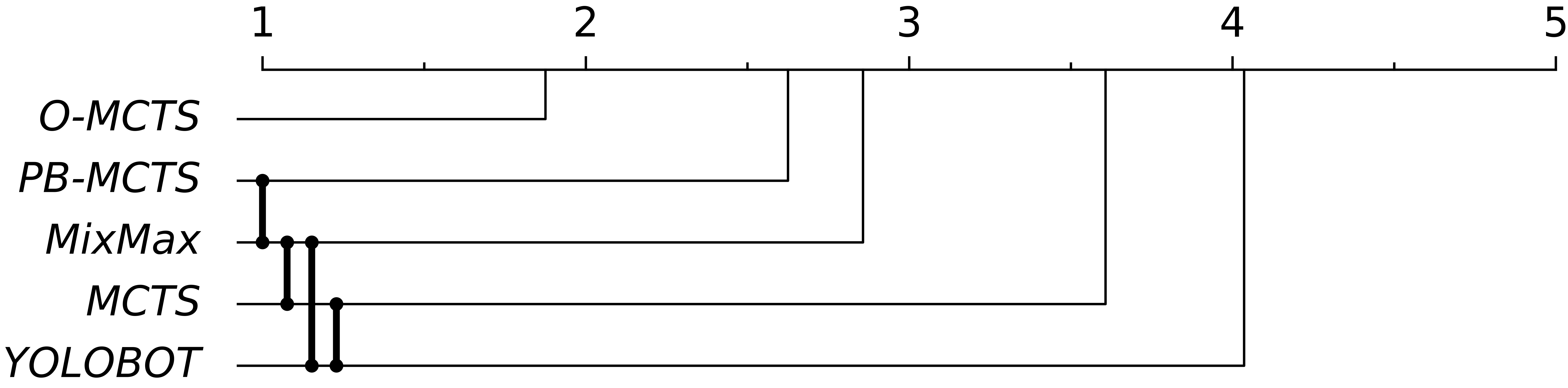}
		\caption{Only won game runs}
		\label{fig:wilcoxonWon}
	\end{subfigure}
	\caption{Average ranks and the result of a Wilcoxon signed rank test with $\alpha=0.01$. Directly connected algorithms do not differ significantly.}
\end{figure}

\begin{table}[t!]
        \centering\includegraphics[width=0.98\columnwidth]{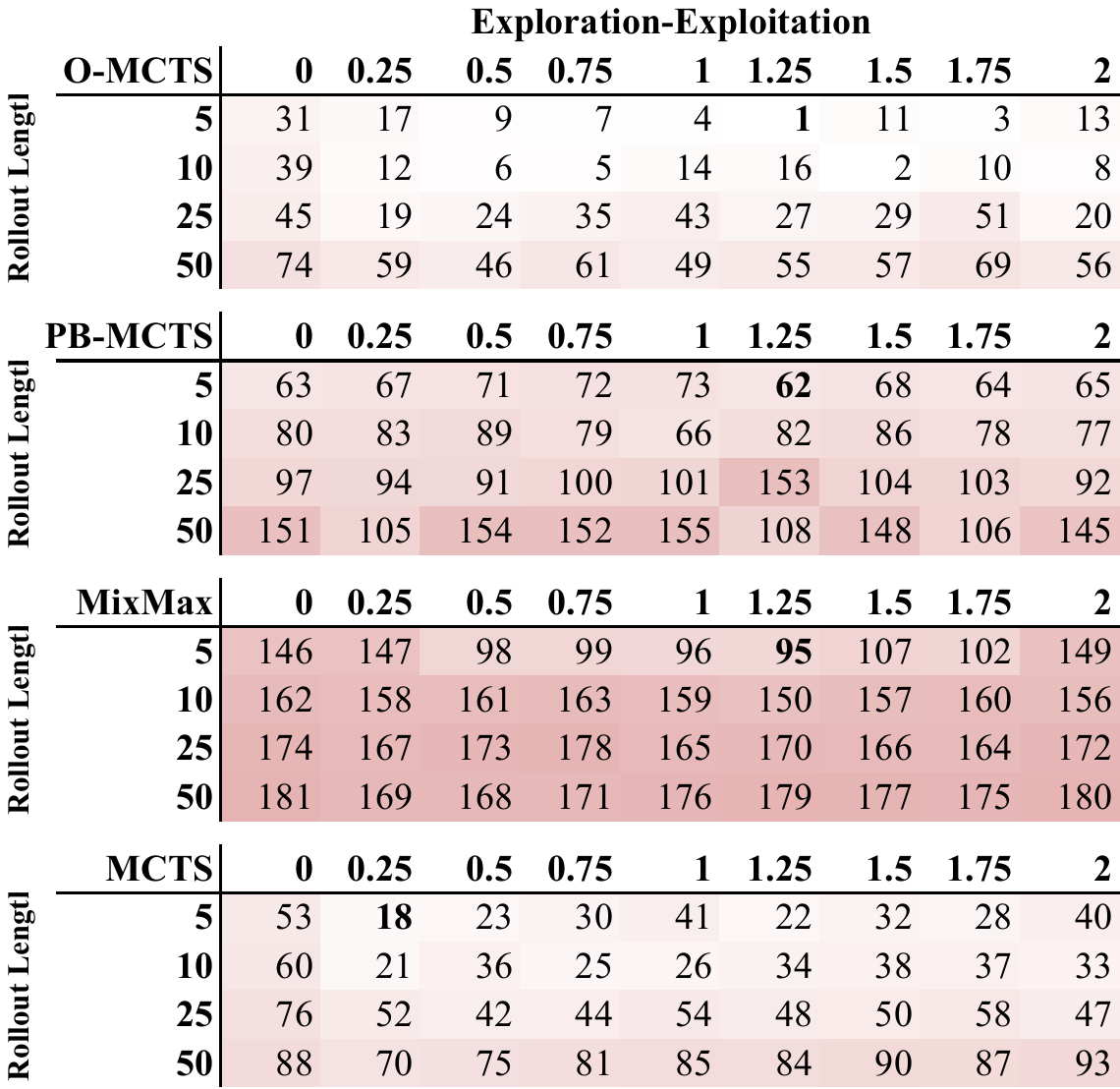}
\caption{Results for different parameters for all algorithms except of \Yolobot{} (Rank 15). In each cell, the overall rank over all games and budgets is shown. 
}
\label{tbl:reslt2}
\end{table}

We can see that \omcts{} performed best with an average rank of $1.6$ and a significantly better performance than MCTS and \pbmcts{}.
Table~\ref{tbl:reslt} allows us to take a closer look on the domains.
For games that are easy to win, such as \game{Surround}, \game{Aliens}, and \game{Whackamole} \omcts{} beats MCTS and \pbmcts{} by winning with a higher score.
In \game{Chase}, a deadly but more deterministic game, \omcts{} is able to achieve a higher win rate.
In deadly and stochastic games like \game{Zelda}, \game{Boulderchase} and \game{Jaws} \omcts{} performs comparable to the other algorithms without anyone performing significant better than the others.


Figure~\ref{fig:wilcoxonWon} summarizes the results when only won games are considered. 
It can be seen, that in this case, \pbmcts{} is significantly better than MCTS.
This implies that if \pbmcts{} manages to win, it does so with a greater score than MCTS, but it wins less often.
\Yolobot{} falls behind because it is designed to primarily maximize the win rate, not the score. 

Inspecting the performance of \mixmax{} it can easily be seen that the hereby added bias towards higher scores often results in a death: Looking at only won games (see Figure~\ref{fig:wilcoxonWon}) it achieves a higher rank than MCTS, but overall its performance is significantly worse. 

In conclusion, we found evidence that \omcts{}'s preference for actions that \emph{maximize win rate} works better than MCTS's tendency to \emph{maximize average performance} for the tested domains.

\paragraph{Parameter Optimization}
In Table~\ref{tbl:reslt2} the overall rank over all parameters for all algorithms are shown.
It is clearly visible that a low rollout length $RL$ improves performance and is more important to tune correctly than the exploration-exploitation trade-off $\Cp$.
Since \Yolobot{} has no parameters, it is not shown.
Except for the extreme case of no exploration ($\Cp=0$), \omcts{} with $RL=5$ is better than any other MCTS algorithm.
The best configuration is \omcts{} with $RL=5$ and $\Cp=1.25$.

\paragraph{Video Demonstrations}
For each algorithm and game, we have recorded a video where the agent wins\footnote{You can watch the videos at \url{https://bit.ly/2ohbYb3}}.
In those videos it can be seen that \omcts{} frequently plays actions that lead to a higher score, whereas 
MCTS play more safely---often too cautious and averse to risking any potentially deadly effect.


\section{Conclusion}
\label{sec:Conclusion}
In this paper we proposed \omcts{}, a modification of MCTS that handles the rewards in an ordinal way:
Instead of averaging backpropagated values to obtain a value estimation, it estimates the winning probability of an action using the Borda score. 
By doing so, the magnitude of distances between different reward signals are disregarded, which can be useful in ordinal domains.
In our experiments 
using
the GVGAI framework, we compared \omcts{} to MCTS, \pbmcts{}, \mixmax{} and \Yolobot{}, a specialized agent for this domain.
Overall, \omcts{} achieved higher win rates and reached higher scores than the other algorithms, confirming that this approach can even be useful in domains where numeric reward information is available.
\omcts{} is based on \omab, an ordinal variation of \ucb{} which we also introduced and tested.
Additionally, we have introduced a hierarchical version of \omab{} with which it is possible to define ordinal thresholds that are optimized first.

\paragraph{Acknowledgments}
This work was supported by the German Research Foundation (DFG project number FU 580/10). 
We gratefully acknowledge the use of the Lichtenberg high performance computer of the TU Darmstadt for our experiments.

\bibliographystyle{spmpsci}      
\bibliography{mybib}
\end{document}